\pdfoutput=1
\documentclass[runningheads]{llncs}

\usepackage[utf8]{inputenc}
\usepackage[T1]{fontenc}

\usepackage{graphicx}
\usepackage{amsmath,amssymb}
\usepackage{hyperref}

\usepackage{booktabs}
\usepackage{acronym}
\usepackage{csquotes}
\usepackage{tikz}
\usetikzlibrary{arrows,arrows.meta,automata,backgrounds,calc,patterns,positioning,shapes,shadows}
\usepackage[linesnumbered, ruled, noend]{algorithm2e}
\SetKw{Continue}{continue}
\usepackage[capitalise]{cleveref}
\usepackage{acronym}
\usepackage{subcaption}

\acrodef{acp}[ACP]{advanced colour passing}
\acrodef{bn}[BN]{Bayesian network}
\acrodef{cp}[CP]{colour passing}
\acrodef{crv}[CRV]{counting randvar}
\acrodef{decor}[DECOR]{detection of commutative factors}
\acrodef{deft}[DEFT]{detection of exchangeable factors}
\acrodef{er}[ER]{entity-relationship}
\acrodef{fg}[FG]{factor graph}
\acrodef{ljt}[LJT]{lifted junction tree}
\acrodef{lv}[logvar]{logical variable}
\acrodef{lve}[LVE]{lifted variable elimination}
\acrodef{mln}[MLN]{Markov logic network}
\acrodef{mn}[MN]{Markov network}
\acrodef{mpdag}[MPDAG]{maximally oriented partially directed acyclic graph}
\acrodef{pcfg}[PCFG]{parametric causal factor graph}
\acrodef{pcrv}[PCRV]{parameterised CRV}
\acrodef{ppcfg}[PPCFG]{partially directed parametric causal factor graph}
\acrodef{pf}[parfactor]{parametric factor}
\acrodef{pfg}[PFG]{parametric factor graph}
\acrodef{prv}[PRV]{parameterised randvar}
\acrodef{rv}[randvar]{random variable}
\acrodef{ve}[VE]{variable elimination}
\acrodef{wl}[WL]{Weisfeiler-Leman}

\crefname{algocf}{Alg.}{Algs.}
\Crefname{algocf}{Algorithm}{Algorithms}
\crefname{algorithm}{Alg.}{Algs.}
\Crefname{algorithm}{Algorithm}{Algorithms}
\crefname{corollary}{Cor.}{Cors.}
\Crefname{corollary}{Corollary}{Corollaries}
\crefname{definition}{Def.}{Defs.}
\Crefname{definition}{Definition}{Definitions}
\crefname{example}{Ex.}{Exs.}
\Crefname{example}{Example}{Examples}
\crefname{proposition}{Prop.}{Props.}
\Crefname{proposition}{Proposition}{Propositions}
\crefname{section}{Sec.}{Secs.}
\Crefname{section}{Section}{Sections}
\crefname{theorem}{Thm.}{Thms.}
\Crefname{theorem}{Theorem}{Theorems}

\newcommand{\abs}[1]{\lvert #1 \rvert}

\newcommand{\domain}[1]{\ensuremath{\mathrm{dom}(#1)}}

\newcommand{\range}[1]{\ensuremath{\mathrm{range}(#1)}}

\newcommand{\upmodels}{\ensuremath{\perp\!\!\!\perp}}

\newcommand{\Ch}{\mathrm{Ch}}
\newcommand{\Ne}{\mathrm{Ne}}
\newcommand{\Pa}{\mathrm{Pa}}
\newcommand{\pa}{\mathrm{pa}}

\definecolor{myyellow}{RGB}{247,192,26}
\definecolor{myblue}{RGB}{37,122,164}
\definecolor{mygreen}{RGB}{78,155,133}
\definecolor{mypurple}{RGB}{86,51,94}

\definecolor{newblue}{RGB}{50,113,173}
\definecolor{newred}{RGB}{222,32,36}
\definecolor{newgreen}{RGB}{70,165,69}
\definecolor{newpurple}{RGB}{140,69,152}

\definecolor{cborange}{RGB}{230,159,0}
\definecolor{cbblue}{RGB}{30,136,229}
\definecolor{cbbluedark}{RGB}{46,37,133}
\definecolor{cbpurple}{RGB}{170,68,153}
\definecolor{cbgreen}{RGB}{0,77,64}
\definecolor{cbgreenlight}{RGB}{93,168,153}
\definecolor{cbbrown}{RGB}{126,41,84}

\pgfdeclarelayer{bg}
\pgfsetlayers{bg,main}

\tikzset{
	rv/.style={draw, ellipse},
	pf/.style={draw, rectangle, fill = gray!30},
	arc/.style = {->, >={[round,sep]Stealth}},
}

\newcommand\factorat[4]{
	\node[pf, label={#2:{#3}}](#4) at (#1) {};
}

\newcommand\pfsat[6]{
	\node[pf, xshift=-1mm, yshift=1mm](#4) at (#1) {};
	\node[pf, label={[label distance=1mm]#2:{#3}}](#5) at (#1) {};
	\node[pf, xshift=1mm, yshift=-1mm](#6) at (#1) {};
}

\newcommand\pfsatd[7]{
	\node[pf, xshift=-1mm, yshift=1mm](#4) at (#1) {};
	\node[pf, label={[label distance=#7]#2:{#3}}](#5) at (#1) {};
	\node[pf, xshift=1mm, yshift=-1mm](#6) at (#1) {};
}

\title{Estimating Causal Effects in Partially Directed Parametric Causal Factor Graphs}
\titlerunning{Estimating Causal Effects in PPCFGs}

\author{
	Malte Luttermann\inst{1} \and
	Tanya Braun\inst{2} \and
	Ralf Möller\inst{3} \and
	Marcel Gehrke\inst{3}
}
\institute{
	German Research Center for Artificial Intelligence (DFKI), Lübeck \\
	\email{malte.luttermann@dfki.de}
	\and
	Data Science Group, University of Münster, Germany \\
	\email{tanya.braun@uni-muenster.de}
	\and
	Institute for Humanities-Centered Artificial Intelligence, University of Hamburg \\
	\email{\{ralf.moeller,marcel.gehrke\}@uni-hamburg.de}
}

\begin{document}

\maketitle

\begin{abstract}
	Lifting uses a representative of indistinguishable individuals to exploit symmetries in probabilistic relational models, denoted as \aclp{pfg}, to speed up inference while maintaining exact answers.
	In this paper, we show how lifting can be applied to causal inference in partially directed graphs, i.e., graphs that contain both directed and undirected edges to represent causal relationships between \aclp{rv}.
	We present \acp{ppcfg} as a generalisation of previously introduced \aclp{pcfg}, which require a fully directed graph.
	We further show how causal inference can be performed on a lifted level in \acp{ppcfg}, thereby extending the applicability of lifted causal inference to a broader range of models requiring less prior knowledge about causal relationships.
\end{abstract}
\begin{keywords}
	causal models; probabilistic relational models; lifted inference.
\end{keywords}

\acresetall
\setcounter{footnote}{0}

\section{Introduction}
A fundamental problem for an intelligent agent performing reasoning under uncertainty is to compute the effect of an action on a certain \ac{rv} on other \acp{rv}.
When computing the effect of an action on a specific \ac{rv}, it is crucial to deploy the semantics of an intervention instead of performing a classical conditioning on that \ac{rv}~\cite[Chapter 4]{Pearl2009a}.
An intervention acting on a \ac{rv} $R$ can be thought of as setting $R$ to a fixed value and removing all incoming influences on the value of $R$.
In practice, generally not all causal relationships in a given model are known and thus, only a partially directed graphical model is available.
In such a partially directed graph, directed edges represent cause-effect relationships and undirected edges represent causal relationships whose direction is unknown.
In this paper, we solve the problem of efficiently estimating causal effects of actions in partially directed lifted probabilistic models, denoted as \aclp{pfg}.
Lifted representations are not only more expressive than propositional models such as \aclp{fg}, but also allow for tractable probabilistic inference with respect to domain sizes of \acp{lv} by exploiting symmetries.

\paragraph{Previous Work.}
The estimation of causal effects using causal graphical models in form of directed acyclic graphs in combination with observational data has been extensively studied in the literature (see, e.g., \cite{Pearl2009a,Pearl2016a,Spirtes2000a}).
Some works incorporate causal knowledge into (propositional) \aclp{fg} (which are originally undirected graphical models) to enable the estimation of causal effects in \aclp{fg}~\cite{Frey2003a,Winn2012a}.
In practice, the underlying causal graph is often not fully known and hence, identifying and estimating causal effects when provided with observational data and a partially directed graph has been investigated~\cite{Guo2021a,Liu2020a,Maathuis2009a,Perkovic2020a}.
However, all of these works perform causal effect estimation on a propositional level and thus lack the expressivity of relational logic, for example to capture the relationships between individual objects.
To represent individual objects and the relationships between them, Poole~\cite{Poole2003a} introduces \aclp{pfg} as lifted representations, which combine relational logic and probabilistic models, thereby allowing to encode that certain properties hold for groups of indistinguishable objects.
In probabilistic inference, lifting exploits symmetries to speed up inference while maintaining exact answers~\cite{Niepert2014a}.
Over the past years, both algorithms for symmetry detection~\cite{Ahmadi2013a,Kersting2009a,Luttermann2024a,Luttermann2024f,Luttermann2024d,Luttermann2023b} allowing the construction of lifted representations such as \aclp{pfg} as well as various lifted inference algorithms operating on \aclp{pfg} have been developed and further refined~\cite{Braun2016a,Braun2018a,DeSalvoBraz2005a,DeSalvoBraz2006a,Kisynski2009a,Milch2008a,Poole2003a,Taghipour2013a}.
More recently, Luttermann et al.~\cite{Luttermann2024b} introduce \acp{pcfg} as an extension of \aclp{pfg} allowing to incorporate causal knowledge into a lifted representation. 
Nevertheless, the authors assume that the causal relationships between the involved \acp{rv} are fully known, which is rarely the case in practical settings.

\paragraph{Our Contributions.}
We introduce \acp{ppcfg} as a generalisation of \acp{pcfg} to obtain a formalism that compactly encodes a full joint distribution over a set of \acp{rv} and at the same time incorporates causal knowledge in the model, if available.
The major advantage of a \ac{ppcfg} over an \ac{pcfg} is that not all causal relationships between the involved \acp{rv} need to be known, thereby reducing the amount of prior knowledge required and thus making the model more suitable for many practical settings. 
We further define $d$-separation in \acp{ppcfg} to reason about conditional independence.
In addition to that, we present an algorithm to efficiently estimate causal effects in \acp{ppcfg} on a lifted level, i.e., a representative of indistinguishable objects is used for computations to speed up inference.
Our algorithm identifies whether a causal effect can be uniquely determined from the given \ac{ppcfg} and if so, outputs the causal effect.
If the undirected edges in the \ac{ppcfg} lead to a causal effect being not uniquely identifiable, our algorithm efficiently enumerates all possible causal effects while operating on a lifted level.

\paragraph{Structure of this Paper.}
We begin by introducing necessary background information and notations. 
Afterwards, we present \acp{ppcfg} as a generalisation of \acp{pcfg}, allowing both for directed and undirected edges in the model and then define $d$-separation in \acp{ppcfg}.
Thereafter, we provide an algorithm to efficiently estimate causal effects in \acp{ppcfg} before we conclude.

\section{Preliminaries} \label{sec:syn_prelims}
We begin to introduce \acp{prv}, which use \acp{lv} as parameters to represent sets of indistinguishable \acp{rv}.
\begin{definition}[Parameterised Random Variable]
	Let $\boldsymbol{R}$ be a set of \ac{rv} names, $\boldsymbol{L}$ a set of \ac{lv} names, and $\boldsymbol{D}$ a set of constants.
	All sets are finite.
	Each \ac{lv} $L$ has a domain $\domain{L} \subseteq \boldsymbol{D}$.
	A \emph{constraint} is a tuple $(\mathcal{X}, C_{\mathcal{X}})$ of a sequence of \acp{lv} $\mathcal{X} = (X_1, \ldots, X_n)$ and a set $C_{\mathcal{X}} \subseteq \times_{i = 1}^n\domain{X_i}$.
	The symbol $\top$ for $C$ marks that no restrictions apply, i.e., $C_{\mathcal{X}} = \times_{i = 1}^n\domain{X_i}$.
	A \emph{\ac{prv}} $R(L_1, \ldots, L_n)$, $n \geq 0$, is a syntactical construct of a \ac{rv} $R \in \boldsymbol{R}$ possibly combined with \acp{lv} $L_1, \ldots, L_n \in \boldsymbol{L}$ to represent a set of \acp{rv}.
	If $n = 0$, the \ac{prv} is parameterless and forms a propositional \ac{rv}.
	A \ac{prv} $A$ (or \ac{lv} $L$) under constraint $C$ is given by $A_{|C}$ ($L_{|C}$), respectively.
	We may omit $|\top$ in $A_{|\top}$ or $L_{|\top}$.
	The term $\range{A}$ denotes the possible values of a \ac{prv} $A$. 
	An \emph{event} $A = a$ denotes the occurrence of \ac{prv} $A$ with range value $a \in \range{A}$.
\end{definition}
\begin{example} \label{ex:prv_example}
	Consider $\boldsymbol{R} = \{Comp, Sal, Rev\}$ for competence, salary, and revenue, respectively, and $\boldsymbol{L} = \{E\}$ with $\domain{E} = \{alice, bob, charlie\}$ (employees), combined into \acp{prv} $Comp(E)$, $Sal(E)$, and $Rev$ with $\range{Comp(E)} = \range{Sal(E)} = \range{Rev} = \{\mathrm{low},\mathrm{medium},\mathrm{high}\}$.
\end{example}
A \ac{pf} describes a function, mapping argument values to positive real numbers, of which at least one is non-zero.
\begin{definition}[Parfactor]
	Let $\boldsymbol \Phi$ denote a set of factor names.
	We denote a \emph{\ac{pf}} $g$ by $\phi(\mathcal{A})_{| C}$ with $\mathcal{A} = (A_1, \ldots, A_n)$ being a sequence of \acp{prv}, $\phi \colon \times_{i = 1}^n \range{A_i} \mapsto \mathbb{R}^+$ being a function with name $\phi \in \boldsymbol \Phi$ mapping argument values to a positive real number called \emph{potential}, and $C$ being a constraint on the \acp{lv} of $\mathcal{A}$.
	We may omit $|\top$ in $\phi(\mathcal{A})_{|\top}$.
	The term $lv(Y)$ refers to the \acp{lv} in some element $Y$, a \ac{prv}, a \ac{pf}, or sets thereof.
	The term $gr(Y_{| C})$ denotes the set of all instances (groundings) of $Y$ with respect to constraint $C$.
\end{definition}
\begin{example} \label{ex:pf_example}
	Take a look at the \ac{pf} $g_1 = \phi_1(Comp(E), Rev)_{| \top}$.
	Assuming the same ranges of the \acp{prv} and the same domains of the \acp{lv} as in \cref{ex:prv_example}, $g_1$ specifies $\abs{\range{Comp(E)}} \cdot \abs{\range{Rev}} = 9$ input-output mappings $\phi_1(\mathrm{low}, \mathrm{low}) = \varphi_1$, $\phi_1(\mathrm{low}, \mathrm{medium}) = \varphi_2$, $\phi_1(\mathrm{low}, \mathrm{high}) = \varphi_3$, and so on with $\varphi_i \in \mathbb{R}^+$ for all $i = 1, \ldots, 9$.
	Further, $lv(g_1) = \{E\}$ and $gr(g_{1_{\top}}) = \{\phi_1(Comp(alice), Rev), \allowbreak \phi_1(Comp(bob), Rev), \allowbreak \phi_1(Comp(charlie), Rev)\}$.
\end{example}
A \ac{pcfg} is then built from a set of \acp{pf} $\{g_1, \ldots, g_m\}$ and the causal relationships between the \acp{prv}, which are encoded by the direction of the edges in the graph structure of the \ac{pcfg}~\cite{Luttermann2024b}.
In its original form, a \ac{pcfg} is a fully directed graph, but we extend this definition to allow for both directed and undirected edges in the following section.

\section{Partially Directed Parametric Causal Factor Graphs}
We now move on to define \acp{ppcfg} as lifted models that are able to incorporate partial causal knowledge, thereby allowing to exploit symmetries (in form of indistinguishable individuals) to speed up both probabilistic and causal inference by performing lifted inference.
\begin{definition}[Partially Directed Parametric Causal Factor Graph] \label{def:pcfg}
	A \emph{\ac{ppcfg}} is a graph $M = (\boldsymbol A \cup \boldsymbol G, \boldsymbol E)$ that consists of variable nodes $\boldsymbol A$, factor nodes $\boldsymbol G$ ($\boldsymbol A \cap \boldsymbol G = \emptyset$), and a set of edges $\boldsymbol E$.
	Each variable node $A \in \boldsymbol A$ represents a \ac{prv} $A$ and every factor node $g \in \boldsymbol G$ represents a \ac{pf} $g = \phi(\mathcal{A})_{| C}$, where $\mathcal A = (A_1, \ldots, A_k)$ with $A_1 \in \boldsymbol A, \ldots, A_k \in \boldsymbol A$ is a sequence of \acp{prv}, $\phi \colon \times_{i = 1}^k \range{A_i} \mapsto \mathbb{R}^+$ is a function, and $C$ is a constraint on the \acp{lv} of $\mathcal{A}$.
	We again may omit $|\top$ in $\phi(\mathcal{A})_{|\top}$.
	For a variable node $A \in \boldsymbol A$ and a factor node $g \in \boldsymbol G$, there is an undirected edge $g - A \in \boldsymbol E$ if $A$ appears in the argument list of $g = \phi(\mathcal{A})$ and no information about the causal relationships between the \acp{prv} in $\mathcal A$ is available. 
	If it is known that all $A' \in \mathcal A \setminus \{A\}$ are causes of $A \in \mathcal A$ (or if $\mathcal A \setminus \{A\} = \emptyset$), the edge $g - A$ can be replaced by a directed edge $g \to A$.

	The semantics of $M$ is given by grounding and building a full joint distribution.
	With $Z$ as the normalisation constant and $\mathcal A_k$ denoting the \acp{prv} appearing in the argument list of $\phi_k(\mathcal A_k)$, $M$ represents the full joint distribution
	$$
		P_M = \frac{1}{Z} \prod_{g \in \boldsymbol G} \prod_{\phi_k \in gr(g)} \phi_k(\mathcal A_k).
	$$
\end{definition}
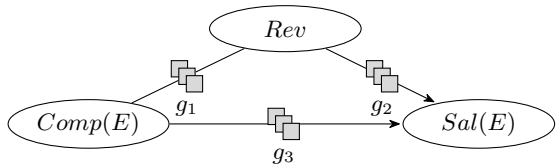
\begin{figure}[t]
	\centering
	\begin{tikzpicture}
	\node[rv, minimum width = 6.2em, draw] (rev) {$Rev$};
	\node[rv, minimum width = 6.2em, draw, below left  = 2.5em and 3.5em of rev, inner sep = 2.0pt] (comp) {$Comp(E)$};
	\node[rv, minimum width = 6.2em, draw, below right = 2.5em and 3.5em of rev, inner sep = 2.0pt] (sal) {$Sal(E)$};

	\pfsat{$(rev)!0.5!(comp)$}{270}{$g_1$}{f1a}{f1}{f1b}
	\pfsat{$(rev)!0.5!(sal)$}{270}{$g_2$}{f2a}{f2}{f2b}
	\pfsat{$(comp)!0.5!(sal)$}{270}{$g_3$}{f3a}{f3}{f3b}

	\begin{pgfonlayer}{bg}
		\draw (comp) -- (f1);
		\draw (f1) -- (rev);
		\draw (rev) -- (f2);
		\draw[arc] (f2) -- (sal);
		\draw (comp) -- (f3);
		\draw[arc] (f3) -- (sal);
	\end{pgfonlayer}
\end{tikzpicture}
	\caption{A \ac{ppcfg} modelling the interplay of a company's revenue and its employees' competences and salaries (without input-output mappings of \acp{pf}).}
	\label{fig:ppcfg_example}
\end{figure}
\begin{example}
	Following \cref{ex:prv_example,ex:pf_example}, \cref{fig:ppcfg_example} depicts a \ac{ppcfg} $M$ modelling the interplay of a company's revenue and its employees' competences and salaries.
	Each \ac{pf} represents a group of ground factors and thus, grounding $M$ results in a partially directed \acl{fg} (see \cref{fig:ppcfg_ground_example}).
	In this particular example, the causal relationships encoded by $M$ tell us that the revenue of the company influences the salary of each individual employee and the competence of a specific employee influences their salary.
	Moreover, there is a dependency between the competence of each individual employee and the revenue of the company, but the causal direction is not encoded in $M$.
	We humans expect that the competence of the employees influences the revenue of the company, but an autonomous agent might not have this information available, resulting in a partially directed graph.
\end{example}
In accordance with the literature, in this paper we consider \acp{ppcfg} that do not contain directed cycles.
Throughout this paper, we denote the parents of a \ac{prv} $A$ in a \ac{ppcfg} $M = (\boldsymbol A \cup \boldsymbol G, \boldsymbol E)$ by $\Pa(A, M) = \{ A' \in \boldsymbol A \mid \exists g \in \boldsymbol G \colon (g - A') \in \boldsymbol E \land (g \to A) \in \boldsymbol E \}$.
Analogously, we define the children of $A$ in $M$ as $\Ch(A, M) = \{ A' \in \boldsymbol A \mid \exists g \in \boldsymbol G \colon (g \to A') \in \boldsymbol E \land (g - A) \in \boldsymbol E \}$ and the neighbours of $A$ in $M$ as $\Ne(A, M) = \{ A' \in \boldsymbol A \mid \exists g \in \boldsymbol G \colon (g - A') \in \boldsymbol E \land (g - A) \in \boldsymbol E \}$.

Note that the semantics of a \ac{ppcfg} is defined with respect to the set of \acp{pf} and thus is independent of the given causal relationships.
A \ac{ppcfg} hence simultaneously encodes a full joint probability distribution over a set of \acp{rv} (represented by \acp{prv}) and a set of causal relationships between those \acp{rv}\footnote{Thus, the setting of having an \ac{ppcfg} is equivalent to having a causal graph and observational data for the \acp{rv} occurring in the causal graph~\cite{Pearl2009a}.}.
Another important remark is that grounding a \ac{ppcfg} results in a partially directed \acl{fg} as introduced by Frey~\cite{Frey2003a}.
The important advantage is that the \ac{ppcfg} combines relational logic with probabilistic models, thereby being more expressive than a \acl{fg} and allowing us to exploit symmetries (in form of indistinguishable individuals) to speed up inference.
\begin{figure}[t]
	\centering
	\begin{tikzpicture}
	\node[rv, minimum width = 6.2em, draw] (rev) {$Rev$};
	\node[rv, minimum width = 8.2em, draw, left = 3em of rev, inner sep = 2.0pt] (salA) {$Sal(alice)$};
	\node[rv, minimum width = 7.4em, draw, right = 3em of rev, inner sep = 2.0pt] (salB) {$Sal(bob)$};
	\node[rv, minimum width = 8.2em, draw, above left   = 2.5em and -0.5em of rev, inner sep = 2.0pt] (compA) {$Comp(alice)$};
	\node[rv, minimum width = 7.4em, draw, above right  = 2.5em and -0.5em of rev, inner sep = 2.0pt] (compB) {$Comp(bob)$};
	\node[rv, minimum width = 10.0em, draw, below right = 2.5em and 1em of rev, inner sep = 2.0pt] (salC) {$Sal(charlie)$};
	\node[rv, minimum width = 10.0em, draw, below left  = 2.5em and 1em of rev, inner sep = 2.0pt] (compC) {$Comp(charlie)$};

	\factorat{$(rev)!0.5!(compA)$}{0}{$\phi_1$}{f11}
	\factorat{$(rev)!0.5!(compB)$}{180}{$\phi_1$}{f12}
	\factorat{$(rev)!0.5!(compC)$}{0}{$\phi_1$}{f13}
	\factorat{$(rev.west)!0.5!(salA.east)$}{270}{$\phi_2$}{f21}
	\factorat{$(rev.east)!0.5!(salB.west)$}{270}{$\phi_2$}{f22}
	\factorat{$(rev)!0.5!(salC)$}{180}{$\phi_2$}{f23}

	\factorat{$(compA)!0.5!(salA)$}{180}{$\phi_3$}{f31}
	\factorat{$(compB)!0.5!(salB)$}{0}{$\phi_3$}{f32}
	\factorat{$(compC)!0.5!(salC)$}{270}{$\phi_3$}{f33}

	\begin{pgfonlayer}{bg}
		\draw (rev) -- (f11);
		\draw (rev) -- (f12);
		\draw (rev) -- (f13);
		\draw (f11) -- (compA);
		\draw (f12) -- (compB);
		\draw (f13) -- (compC);
		\draw (rev) -- (f21);
		\draw (rev) -- (f22);
		\draw (rev) -- (f23);
		\draw[arc] (f21) -- (salA);
		\draw[arc] (f22) -- (salB);
		\draw[arc] (f23) -- (salC);
		\draw (compA) -- (f31);
		\draw (compB) -- (f32);
		\draw (compC) -- (f33);
		\draw[arc] (f31) -- (salA);
		\draw[arc] (f32) -- (salB);
		\draw[arc] (f33) -- (salC);
	\end{pgfonlayer}
\end{tikzpicture}
	\caption{A visualisation of the resulting model when grounding the \ac{ppcfg} $M$ given in \cref{fig:ppcfg_example}, where $\domain{E} = \{alice,bob,charlie\}$.}
	\label{fig:ppcfg_ground_example}
\end{figure}
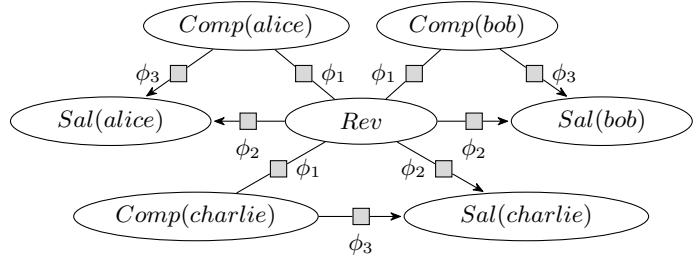
\begin{example}
	Take a look at \cref{fig:ppcfg_ground_example}, which presents the resulting model when grounding the \ac{ppcfg} $M$ given in \cref{fig:ppcfg_example}.
	The model $gr(M)$ now contains the \acp{rv} $Comp(alice)$, $Comp(bob)$, $Comp(charlie)$, which are represented by a single \ac{prv} $Comp(E)$ with a \ac{lv} $E$, $\domain{E} = \{alice,bob,charlie\}$, in $M$ (analogously for $Sal(alice)$, $Sal(bob)$, $Sal(charlie)$ and $Sal(E)$).
	Moreover, each \ac{pf} $g_i$ in $M$ represents a set of three (identical) ground factors $\phi_i$ in the grounded model $gr(M)$.
	In consequence, the lifted representation $M$ is more compact than its grounded counterpart $gr(M)$, thereby reducing run times required to perform inference.
	The underlying assumption is that there are indistinguishable objects, here employees, which can be represented by a representative.
\end{example}
Since a \ac{ppcfg} encodes a full joint probability distribution, a \ac{ppcfg} can be used to compute marginal distributions of grounded \acp{prv} given observations of specific events.
A query asks for a probability distribution (or a specific probability) given a set of observed events or, in the case of an interventional query, given a set of interventions fixing the values of certain (grounded) \acp{prv}.
\begin{definition}[Query]
	A \emph{query} $P(Q \mid E_1 = e_1, \ldots, E_k = e_k)$ consists of a query term $Q$ and a set of events $\{E_j = e_j\}_{j=1}^{k}$ with $Q$ and $E_j$ being grounded or parameterless \acp{prv}.
	To query a specific probability instead of a probability distribution, the query term is an event $Q = q$.
	An \emph{interventional query} $P(Q \mid do(R_1 = r_1, \ldots, R_k = r_k))$ asks for a probability distribution given that the grounded or parameterless \acp{prv} $R_j$ are set to a fixed value $r_j$.
\end{definition}
\begin{example}
	The query $P(Rev \mid Comp(alice) = \mathrm{high})$ asks for the probability distribution of $Rev$ given that the event $Comp(alice) = \mathrm{high}$ is observed.
	Keep in mind that, in general, observing and intervening are two different things, e.g., generally $P(Rev \mid Comp(alice) = \mathrm{high}) \neq P(Rev \mid do(Comp(alice) = \mathrm{high}))$.
\end{example}
Before we consider interventions in detail, we first take a closer look at conditional independence in \acp{ppcfg}, which is needed to reason about interventions.

\section{Conditional Independence in PPCFGs}
The notion of $d$-separation~\cite{Pearl1986a} provides a graphical criterion to test for conditional independence in directed acyclic graphs and is essential to compute the effect of an intervention in the sense that all non-causal paths, so-called backdoor paths, need to be blocked to remove spurious effects.
Frey~\cite{Frey2003a} extends the notion of $d$-separation to partially directed \aclp{fg} and we build on this definition to define $d$-separation in \acp{ppcfg} (analogously to $d$-separation in \acp{pcfg}~\cite{Luttermann2024b}).
Note that the semantics of $d$-separation in \acp{ppcfg} is defined on a ground level.
\begin{definition}[$\boldsymbol d$-separation] \label{def:d_sep_ppcfg}
	Let $M = (\boldsymbol A \cup \boldsymbol G, \boldsymbol E)$ be a \ac{ppcfg}.
	Given three disjoint sets of \acp{rv} $\boldsymbol X$, $\boldsymbol Y$, and $\boldsymbol Z$ (subsets of $gr(\boldsymbol A)$), we say that $\boldsymbol X$ and $\boldsymbol Y$ are conditionally independent given $\boldsymbol Z$, written as $\boldsymbol X \upmodels \boldsymbol Y \mid \boldsymbol Z$, if the nodes in $\boldsymbol Z$ block all paths between the nodes in $\boldsymbol X$ and the nodes in $\boldsymbol Y$ in $gr(M)$.
	A path is a connected sequence of edges (independent of their directions) and it is therefore also possible for a path to pass from a parent of a factor to another parent of that factor.
	A path is blocked by the nodes in $\boldsymbol Z$ if
	\begin{enumerate}
		\item \label{item:d_sep_ppcfg_collider} the path contains the pattern $\phi_1 \to A \gets \phi_2$ such that neither $A$ nor any of its descendants are in $\boldsymbol Z$, or
		\item the path passes from $\phi_1$ through $A$ to $\phi_2$ such that it does not contain the pattern $\phi_1 \to A \gets \phi_2$ and $A$ is in $\boldsymbol Z$, or
		\item the path passes from a parent of a factor $\phi$ to another parent of $\phi$, and neither the child of $\phi$ nor any of its descendants are in $\boldsymbol Z$.
	\end{enumerate}
\end{definition}
\begin{example}
	Consider the grounded \ac{ppcfg} $M$ depicted in \cref{fig:ppcfg_ground_example}.
	$M$ encodes, for example, that the competence of $alice$ is independent of $bob$'s salary given the revenue of the company, written as $\{Comp(alice)\} \upmodels \{Sal(bob)\} \mid \{Rev\}$, because all paths from $Comp(alice)$ to $Sal(bob)$ pass through $Rev$.
\end{example}
\begin{remark}
	\Cref{def:d_sep_ppcfg} is slightly different from the definition of $d$-separation provided by Frey~\cite{Frey2003a} for partially directed \aclp{fg} in the sense that \cref{def:d_sep_ppcfg} is more flexible by allowing for variable nodes to have multiple parent factor nodes (e.g., in \cref{fig:ppcfg_example}, $Sal(E)$ has two parent factor nodes $g_2$ and $g_3$ instead of a single parent factor node with two inputs from $Comp(E)$ and $Rev$).
\end{remark}
\begin{remark} \label{remark:ci_in_ppcfg}
	In case a \ac{ppcfg} $M$ is constructed by hand (instead of being learned from observational data), it is possible to construct a mismatch of the conditional independence statements encoded in the graph structure of $M$ and the conditional independence statements implied by the full joint probability distribution encoded by $M$.
	We therefore assume that the graph structure of a given \ac{ppcfg} $M$ encodes exactly the same conditional independence statements as the underlying full joint probability distribution encoded by $M$.
\end{remark}
As we have seen, the semantics of $d$-separation in \acp{ppcfg} is defined on a ground level.
However, it is possible to check for $d$-separation on a lifted level without having to ground the entire \ac{ppcfg}.
For this purpose, the well-known Bayes-Ball algorithm~\cite{Shachter1998a} that allows us to check for conditional independence can be applied on a lifted level, that is, it can be run on the \ac{ppcfg} instead of on its grounded counterpart~\cite{Meert2010a}.
The idea of the Bayes-Ball algorithm is that a bouncing ball is sent through the graph structure such that the ball can pass through a node, bounce back, or be blocked to determine blocked paths (more details can be found in \cite{Shachter1998a}).
It is also possible to check whether \acp{prv} (instead of ground \acp{rv}) are conditionally independent in a highly efficient manner on a lifted level.
In a \ac{ppcfg}, each \ac{prv} $A$ is represented by a single variable node and thus, checking for conditional independence statements that involve $A$ can be done by looking at this single variable node instead of taking into account all groundings $gr(A)$ of $A$.
In contrast, in a propositional setting (i.e., in a ground model), each ground \ac{rv} in $gr(A)$ must be looked at individually (for example, to check whether $\{Comp(E)\} \upmodels \{Sal(E)\} \mid \{Rev\}$ holds, only three variable nodes are of relevance in the lifted representation while $2 \cdot \abs{\domain{E}} + 1$ nodes are of relevance in the ground model).

We next show how the estimation of causal effects, which rely on the notion of an intervention, can efficiently be realised in \acp{ppcfg}.
The idea is again to avoid grounding the entire \ac{ppcfg}, if possible. 

\section{Efficient Estimation of Causal Effects in PPCFGs}
To compute the effect of actions carried out on \acp{rv} on other \acp{rv}, we have to answer interventional queries~\cite{Pearl2009a}.
An intervention on a grounded or parameterless \ac{prv} $R$, denoted as $do(R = r)$ where $r \in \range{R}$, changes the structure of the underlying model by setting the value of $R$ to $r$ and removing all incoming influences on $R$.
An intervention is defined on a fully directed graph.
\begin{definition}[Intervention] \label{def:intervention}
	Let $M = (\boldsymbol A \cup \boldsymbol G, \boldsymbol E)$ be a fully directed \ac{ppcfg} and let $gr(\boldsymbol A) = \{R_1, \ldots, R_n\}$ denote the set of \acp{rv} obtained by grounding the \acp{prv} in $\boldsymbol A$.
	Any probability distribution entailing the conditional independence statements encoded by $M$ can be factorised as $P(R_1, \ldots, R_n) = \prod_{i = 1}^n P(R_i \mid \Pa(R_i, gr(M)))$.
	An intervention $do(R'_1 = r'_1, \ldots, R'_k = r'_k)$ changes the underlying probability distribution such that
	\begin{align*}
		&P(R_1 = r_1, \ldots, R_n = r_n \mid do(R'_1 = r'_1, \ldots, R'_k = r'_k)) \\
		=
		&\begin{cases}
			\prod\limits_{R_i \in \{R_1, \ldots, R_n\} \setminus \{R'_1, \ldots, R'_k\}} P(r_i \mid \pa(R_i, gr(M))) & \text{if } \forall j \in \{1, \ldots, k\}\colon r_j = r'_j \\
			0 & \text{otherwise},
		\end{cases}
	\end{align*}
	where $\pa(R_i, gr(M))$ denotes the values of $\Pa(R_i, gr(M))$.
\end{definition}
Observe that the definition of an intervention refers to a fully directed \ac{ppcfg}.
In general, we deal with \acp{ppcfg} that might contain undirected edges and thus represent a whole class of fully directed \acp{ppcfg}, namely all fully directed \acp{ppcfg} that entail the same conditional independence statements as the initial \ac{ppcfg}\footnote{For classical (propositional) directed acyclic graphs, the set of fully directed acyclic graphs representing identical conditional independence statements is known under the name of a \emph{Markov equivalence class}.}.
To determine the semantics of an intervention, we thus need to know the parents of the \acp{prv} on which we intervene.
As we deal with \acp{ppcfg} that might contain undirected edges, however, we do not always know the real parents of a \ac{prv}.
\begin{example}
	Given the \ac{ppcfg} depicted in \cref{fig:ppcfg_example}, the company might wonder whether it is worth it to send its employee $alice$ to an expensive training course to increase the revenue of the company due to the increased competence of $alice$.
	Hence, the company is interested in computing the quantity $P(Rev \mid do(Comp(alice) = \mathrm{high}))$.
	Since an intervention can be thought of as setting the value of $Comp(alice)$ to $\mathrm{high}$ and removing all incoming edges of $Comp(alice)$ in the graph, there are two possible scenarios when computing $P(Rev \mid do(Comp(alice) = \mathrm{high}))$: (i) $Rev$ is a parent of $Comp(alice)$ in the true model and thus, the underlying probability distribution is changed, or (ii) $Comp(alice)$ is a parent of $Rev$ in the true model and therefore, the underlying probability distribution remains unchanged as $Comp(alice)$ has no parents.
	Without further background information, we do not know which of these two scenarios is actually correct and thus, we need to consider both possibilities.
\end{example}
Fortunately, we do not always have to consider all possible fully directed \acp{ppcfg} represented by a given \ac{ppcfg} when computing the effect of an intervention because there are settings in which the represented fully directed \acp{ppcfg} all yield the same effect of the intervention.
In particular, in case all parents of the \acp{rv} on which we intervene are known, we can uniquely determine the effect of the intervention even if there are still undirected edges in the \ac{ppcfg}\footnote{In fact, this has been shown for propositional models~\cite{Maathuis2009a,Nandy2017a} and we now transfer this result to relational models.}.
\begin{theorem} \label{th:known_parents}
	Let $M = (\boldsymbol A \cup \boldsymbol G, \boldsymbol E)$ denote a \ac{ppcfg} and let $P(Q \mid do(R'_1 = r'_1, \ldots, R'_k = r'_k))$ be an interventional query with $Q \in gr(\boldsymbol A)$, $R'_1 \in gr(\boldsymbol A), \allowbreak \ldots, \allowbreak R'_k \in gr(\boldsymbol A)$, and $\{Q\} \cap \{R'_1, \ldots, R'_k\} = \emptyset$.
	If it holds that $\Ne(R'_1, gr(M)) = \emptyset, \ldots, \Ne(R'_k, gr(M)) = \emptyset$, then the result of $P(Q \mid do(R'_1 = r'_1, \ldots, R'_k = r'_k))$ is identical for all fully directed \acp{ppcfg} represented by $M$.
\end{theorem}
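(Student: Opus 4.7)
The plan is to exploit the semantic assumption from \cref{remark:ci_in_ppcfg} (every fully directed \ac{ppcfg} represented by $M$ is Markov with respect to $P_M$) together with the observation that the truncated factorisation from \cref{def:intervention} depends on the underlying extension only through the parent sets of the intervention targets $R'_1, \ldots, R'_k$. Hence it suffices to show that these parent sets coincide across all fully directed extensions of $M$.

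\emph{Step 1 (parent-set invariance).} I would first show that, under the hypothesis $\Ne(R'_j, gr(M)) = \emptyset$, the set $\Pa(R'_j, gr(M'))$ is the same for every fully directed \ac{ppcfg} $M'$ represented by $M$. Fix $R'_j$ and any ground factor $g$ whose scope contains $R'_j$. Singleton factors trivially contribute nothing to $\Pa(R'_j, \cdot)$. Otherwise, $g$ must already carry a directed edge in $gr(M)$: if every incident edge of $g$ were undirected, then any other \ac{prv} in the scope of $g$ would lie in $\Ne(R'_j, gr(M))$, contradicting the hypothesis. Because each factor in a \ac{ppcfg} has at most one outgoing directed edge (by \cref{def:pcfg}), the directed edge already present on $g$ is preserved in every extension and fixes the child of $g$: either $g \to R'_j$, in which case every remaining \ac{prv} in the scope of $g$ belongs to $\Pa(R'_j, gr(M'))$ in every extension, or $g \to A$ for some $A \neq R'_j$, in which case $R'_j$ becomes a parent of $A$ rather than of itself and contributes nothing to $\Pa(R'_j, gr(M'))$. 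Either way, the contribution of $g$ to $\Pa(R'_j, \cdot)$ is invariant across extensions.

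\emph{Step 2 (plug into the truncated factorisation).} By \cref{remark:ci_in_ppcfg}, every fully directed extension $M'$ is Markov with respect to $P_M$, so writing $\boldsymbol r = (r_1, \ldots, r_n)$ and $\boldsymbol r' = (r'_1, \ldots, r'_k)$,
\[
P_M(\boldsymbol r) = \prod_{i=1}^{n} P(r_i \mid \pa(R_i, gr(M'))).
\]
Combining this identity with the formula from \cref{def:intervention} yields, whenever $\boldsymbol r$ is consistent with $\boldsymbol r'$,
\[
P(\boldsymbol r \mid do(\boldsymbol r')) = \frac{P_M(\boldsymbol r)}{\prod_{j=1}^{k} P(r'_j \mid \pa(R'_j, gr(M')))}.
\]
The numerator is independent of the choice of $M'$ by definition of $P_M$, while the denominator is independent of $M'$ by Step~1. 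Hence the quotient is identical across all fully directed extensions of $M$, and on inconsistent outcomes both sides are zero, which gives the claim.

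The main obstacle is Step~1: although fully directed extensions may orient factors far from the intervention targets in genuinely different ways, the condition $\Ne(R'_j, gr(M)) = \emptyset$ forces every factor touching $R'_j$ to be \emph{already} oriented in $gr(M)$ at the relevant position. Translating this local observation into a clean invariance statement via the definitions of $\Ne$ and $\Pa$ together with the ``at most one outgoing directed edge per factor'' property is the technical heart of the proof.
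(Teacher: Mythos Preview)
Your proposal is correct and shares the paper's high-level structure: first argue that $\Ne(R'_j,gr(M))=\emptyset$ forces every factor touching $R'_j$ to already have its child determined, so $\Pa(R'_j,\cdot)$ is invariant across fully directed extensions; then conclude that the truncated factorisation from \cref{def:intervention} does not depend on the chosen extension.

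The route diverges in Step~2. The paper keeps the product form $\prod_{R_i\notin\{R'_1,\ldots,R'_k\}} P(r_i\mid\pa(R_i,gr(M'')))$ and disposes of the non-intervention factors by appealing to Markov equivalence: all fully directed extensions encode the same conditional independence statements as $M$, hence their induced factorisations of $P_M$ agree. You instead rewrite the truncated factorisation as the quotient $P_M(\boldsymbol r)\,/\,\prod_{j} P(r'_j\mid\pa(R'_j,gr(M')))$, observe that the numerator is extension-free by definition of $P_M$, and reduce everything to the parent-set invariance from Step~1. Your quotient argument is a bit more self-contained (it avoids the slightly informal ``equivalent factorisations'' step), while the paper's argument makes the connection to Markov equivalence classes explicit, which motivates the later enumeration over possible parent sets in \cref{alg:lci_ppcfg}. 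Either way, both proofs hinge on the same local observation about factors incident to the intervention targets, which you spell out more carefully than the paper does.
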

\begin{proof}
	Let $\{R_1, \ldots, R_{\ell}\} = gr(\boldsymbol A) \setminus \{Q, R'_1, \ldots, R'_k\}$ denote the set of all grounded \acp{rv} that do not occur in the given query.
	From \cref{def:intervention} we know that
	\begin{align} \label{eq:intervention_distribution}
		\begin{split}
			&P(Q = q \mid do(R'_1 = r'_1, \ldots, R'_k = r'_k)) \\
			=
			&
			\sum\limits_{r_1 \in \range{R_1}} \ldots \sum\limits_{r_{\ell} \in \range{R_{\ell}}} \prod\limits_{R_i \in gr(\boldsymbol A) \setminus \{R'_1, \ldots, R'_k\}} P(r_i \mid \pa(R_i, gr(M))).
		\end{split}
	\end{align}
	If it holds that $\Ne(R'_1, gr(M)) = \emptyset, \ldots, \Ne(R'_k, gr(M)) = \emptyset$, then the parents of $R'_1, \ldots, R'_k$ are fully known and identical in all fully directed \acp{ppcfg} represented by $M$.
	Hence, it remains to be shown that the factorisation of all ground \acp{rv} that are not in $\{R'_1, \ldots, R'_k\}$ is equivalent for all fully directed \acp{ppcfg} represented by $M$.
	We know that all fully directed \acp{ppcfg} represented by $M$ entail exactly the same conditional independence statements as $M$ and thus, the factorisations induced by these fully directed \acp{ppcfg} entail equivalent semantics (just as all \acl{bn} structures over  a fixed set of \acp{rv} entailing the same conditional independence statements induce equivalent factorisations of the underlying probability distribution).
	Consequently, \cref{eq:intervention_distribution} yields the same result for all fully directed \acp{ppcfg} represented by $M$.
\end{proof}
\Cref{th:known_parents} implies that we do not have to consider all possible edge directions of the undirected edges in a \ac{ppcfg} when computing the effect of an intervention but just the possible directions of the undirected edges that are relevant for the intervention, that is, the directions of the undirected edges that are connected to the \acp{rv} on which we intervene.
Note that all terms required to answer the interventional query can be computed by querying the \ac{ppcfg}, as the \ac{ppcfg} compactly encodes the full joint probability distribution over all ground \acp{rv}.
The semantics of the \ac{ppcfg} is well-defined even if there are undirected edges in the graph because the factors do not necessarily encode distributions conditioned on the parents but instead encode arbitrary local distributions that factorise the full joint probability distribution.

Intuitively, it becomes clear that the effect of an intervention cannot be uniquely determined if there are undirected edges connected to the \acp{rv} on which we intervene because there are different possible parent sets that might result in multiple disjoint effects of the intervention.
\begin{theorem} \label{th:unknown_parents}
	Let $M = (\boldsymbol A \cup \boldsymbol G, \boldsymbol E)$ denote a \ac{ppcfg} and let $P(Q \mid do(R'_1 = r'_1, \ldots, R'_k = r'_k))$ be an interventional query with $Q \in gr(\boldsymbol A)$, $R'_1 \in gr(\boldsymbol A), \allowbreak \ldots, \allowbreak R'_k \in gr(\boldsymbol A)$, and $\{Q\} \cap \{R'_1, \ldots, R'_k\} = \emptyset$.
	If there exists a \ac{rv} $R'_i \in \{R'_1, \ldots, R'_k\}$ such that $\Ne(R'_i, gr(M)) \neq \emptyset$, then the result of $P(Q \mid do(R'_1 = r'_1, \ldots, R'_k = r'_k))$ is not guaranteed to be uniquely determined.
\end{theorem}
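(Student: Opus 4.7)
The plan is to prove the statement by exhibiting a witness: two distinct fully directed \acp{ppcfg} $M_1$ and $M_2$ both represented by $M$ for which the interventional query evaluates to different numbers. Since \cref{th:known_parents} handles the case of all-empty neighbour sets, it suffices to establish non-uniqueness whenever some $\Ne(R'_i, gr(M)) \neq \emptyset$, which is precisely the hypothesis. The overall strategy is therefore a contrapositive existence argument: produce two members of the Markov equivalence class whose induced factorisations disagree after applying \cref{def:intervention}.

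First I would fix an $R'_i$ with $\Ne(R'_i, gr(M)) \neq \emptyset$, pick a specific neighbour $N \in \Ne(R'_i, gr(M))$, and let $g$ denote the factor witnessing the undirected edges $g - R'_i$ and $g - N$ in $gr(M)$. I would then construct $M_1$ by orienting this edge so that $N$ is a parent of $R'_i$, and $M_2$ by orienting it so that $R'_i$ is a parent of $N$, while orienting all remaining undirected edges of $M$ in the same way in both graphs. Next I would apply \cref{def:intervention} to both factorisations. In $M_1$ the factor $P(r'_i \mid \Pa(R'_i, gr(M_1)))$, which contains $N$, is removed by the intervention, so $N$'s remaining factor does not depend on $r'_i$; in $M_2$ the variable $N$ retains a conditional factor $P(n \mid r'_i, \ldots)$ that does depend on $r'_i$. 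Marginalising out the non-query variables as in \cref{eq:intervention_distribution} then yields two expressions for $P(Q \mid do(R'_1 = r'_1, \ldots, R'_k = r'_k))$ that are not identically equal across choices of the underlying potentials. A minimal two-variable instance with $Q = N$ and a single undirected edge between $N$ and $R'_i$ already makes this transparent: $M_1$ gives $P(N)$ whereas $M_2$ gives $P(N \mid R'_i = r'_i)$, and these disagree whenever the shared factor renders $N$ and $R'_i$ dependent.

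The main obstacle is the bookkeeping required to argue that flipping exactly one undirected edge yields another fully directed \ac{ppcfg} in the same Markov equivalence class, i.e., acyclic and encoding the same conditional independence statements as $M$. I would handle this by appealing to the defining convention of \acp{ppcfg} together with \cref{remark:ci_in_ppcfg}: the undirected status of the edge $g - R'_i$ in $M$ already encodes that at least two orientations are consistent with the \ac{ppcfg}'s \ac{ci} structure, because otherwise the edge would be oriented in $M$. By choosing the remaining undirected edges identically in $M_1$ and $M_2$, any potential concern about cycle creation or altered conditional independence is localised to the single flipped edge and is ruled out precisely by its undirected status in $M$. With both orientations in hand, \cref{def:intervention} produces two distinct values of the interventional query for generic potentials, establishing the claimed non-uniqueness and completing the proof.
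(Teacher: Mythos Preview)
Your approach is more constructive than the paper's, which is very short: the paper simply observes that if $\Ne(R'_i, gr(M)) \neq \emptyset$ then the parent set of $R'_i$ is not guaranteed to be identical across the fully directed \acp{ppcfg} represented by $M$, so the factors removed from the product in \cref{eq:intervention_distribution} may differ, \emph{possibly} yielding different answers. Because the conclusion is only that uniqueness is ``not guaranteed'', this soft argument is all the paper offers; no explicit pair $M_1,M_2$ is constructed.

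Your general construction, however, has a genuine gap. The claim that ``the undirected status of the edge $g - R'_i$ in $M$ already encodes that at least two orientations are consistent with the \ac{ppcfg}'s conditional independence structure, because otherwise the edge would be oriented in $M$'' is not supported by \cref{def:pcfg}: there an undirected edge merely records that the causal direction is \emph{unknown}, not that both directions are admissible in the equivalence class. The Remark immediately following the theorem says this explicitly: ``not all undirected edges can be oriented in both directions (because they are not allowed to introduce a cycle or to change the conditional independence statements)''. Hence your localisation step---``any potential concern about cycle creation or altered conditional independence \ldots is ruled out precisely by its undirected status in $M$''---is circular. Concretely, flipping one edge while keeping all other orientations fixed can create or destroy a collider with an adjacent already-directed edge, so $M_1$ and $M_2$ need not both be valid members of the class represented by $M$. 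Consequently, you are attempting to prove something stronger than the theorem asserts (actual non-uniqueness for every such $M$), and that stronger claim is false.

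What does survive is your minimal two-variable instance with $Q=N$ and a single undirected edge between $N$ and $R'_i$: one orientation yields $P(N)$ and the other $P(N \mid R'_i = r'_i)$, which differ for generic potentials. That single witness is exactly what the ``not guaranteed'' phrasing requires and is, in effect, a more explicit version of the paper's argument. So the right fix is to drop the general $M_1/M_2$ construction and present only the concrete witness.
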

\begin{proof}
	If there exists a \ac{rv} $R'_i \in \{R'_1, \ldots, R'_k\}$ such that $\Ne(R'_i, gr(M)) \neq \emptyset$ holds, there are undirected edges connected to $R'_i$, implying that the parents of $R'_i$ are not guaranteed to be identical in all fully directed \acp{ppcfg} represented by $M$.
	Since we know that \cref{eq:intervention_distribution} gives us the result of the given query, the factors being removed from the product on the right hand side of the equation might differ depending on the actual parents of the ground \acp{rv} on which we intervene, thereby possibly yielding different results for the query.
\end{proof}
\begin{remark}
	There are scenarios in which it is possible to uniquely determine the result of an interventional query even if there are undirected edges connected to the \acp{rv} on which we intervene as not all undirected edges can be oriented in both directions (because they are not allowed to introduce a cycle or to change the conditional independence statements encoded in the graph structure)\footnote{An orientation of an undirected edge alters the conditional independence statements if the new orientation introduces the pattern $R_1 - \phi_1 \to R_2 \gets \phi_2 - R_3$ such that $R_1$ and $R_3$ are not directly connected via a factor---see \cref{item:d_sep_ppcfg_collider} in \cref{def:d_sep_ppcfg}.}.
\end{remark}
Combining the insights from \cref{th:known_parents,th:unknown_parents} naturally leads to an algorithm to compute the effect of interventions in \acp{ppcfg}.
The idea is that all possible parent sets of the intervention variables have to be considered.
If there is just one possible set of parents, the effect of the intervention can be uniquely determined, otherwise there are multiple possible effects that are enumerated.
This idea is incorporated in the IDA algorithm and its variants~\cite{Guo2021a,Liu2020a,Maathuis2009a} for single interventions (i.e., interventions $do(R' = r')$) in propositional models.
\Cref{alg:lci_ppcfg} displays our proposed algorithm, which extends the idea of just considering the possible parent sets of intervention variables to joint interventions (i.e., interventions $do(R'_1 = r'_1, \ldots, R'_k = r'_k)$ where $k \geq 1$) in relational models\footnote{\Cref{alg:lci_ppcfg} transfers the results for joint interventions in propositional models with undirected edges~\cite{Nandy2017a} to relational models.}.

\begin{algorithm}[t]
	\SetKwInOut{Input}{Input}
	\SetKwInOut{Output}{Output}
	\caption{Lifted Causal Inference in \acp{ppcfg}}
	\label{alg:lci_ppcfg}
	\Input{A \ac{ppcfg} $M = (\boldsymbol A \cup \boldsymbol G, \boldsymbol E)$ and an interventional query $P(Q \mid do(R'_1 = r'_1, \ldots, R'_k = r'_k))$ with $Q \in gr(\boldsymbol A)$, $R'_1 \in gr(\boldsymbol A), \ldots, R'_k \in gr(\boldsymbol A)$ and $\{Q\} \cap \{R'_1, \ldots, R'_k\} = \emptyset$.}
	\Output{The set of all possible post-intervention distributions $P(Q \mid do(R'_1 = r'_1, \ldots, R'_k = r'_k))$ represented by $M$.}
	\BlankLine
	$\boldsymbol P \gets \emptyset$\;
	$\{R_1, \ldots, R_{\ell}\} \gets gr(\boldsymbol A) \setminus \{Q, R'_1, \ldots, R'_k\}$\;
	$M' \gets$ \Ac{pcfg} after splitting \acp{pf} in $M$ based on each $R'_i \in \{R'_1, \ldots, R'_k\}$\;
	\ForEach{$\boldsymbol C_1 \subseteq \Ne(R'_1, M'), \ldots, \boldsymbol C_k \subseteq \Ne(R'_k, M')$ s.t.\ $\boldsymbol C_1, \ldots, \boldsymbol C_k$ are cliques}{ \label{line:lci_ppcfg_loop} 
		$M'' \gets$ \Ac{pcfg} after orienting all edges $C - \phi - R'_i$ as $C - \phi \to R'_i$ in $M'$ for all $C \in \boldsymbol C_i$ for every $i \in \{1, \ldots, k\}$\; \label{line:lci_ppcfg_orient_parents}
		$M'' \gets$ Arbitrary fully directed \ac{ppcfg} represented by $M''$\;
		\If{no such $M''$ exists}{
			\Continue\;
		}
		$D \gets \sum\limits_{r_1 \in \range{R_1}} \ldots \sum\limits_{r_{\ell} \in \range{R_{\ell}}} \prod\limits_{R_i \in gr(\boldsymbol A) \setminus \{R'_1, \ldots, R'_k\}} P(r_i \mid \pa(R_i, M''))$\; \label{line:lci_ppcfg_formula} 
		Add $D$ to $\boldsymbol P$\;
	}
	\Return{$\boldsymbol P$}
\end{algorithm}

Given a \ac{ppcfg} $M = (\boldsymbol A \cup \boldsymbol G, \boldsymbol E)$ and an interventional query $P(Q \mid do(R'_1 = r'_1, \ldots, R'_k = r'_k))$, \cref{alg:lci_ppcfg} proceeds as follows to compute the set of all possible results for the given query.
First, \cref{alg:lci_ppcfg} splits the \acp{pf} in $M$ based on each $R'_i \in \{R'_1, \ldots, R'_k\}$.
Splitting the \acp{pf} results in a modified \ac{ppcfg} $M'$ entailing equivalent semantics as $M$ and works as follows~\cite{DeSalvoBraz2005a}.
Recall that $R'_i = A(L_1 = l_1, \dots, L_j = l_j)$, $l_1 \in \domain{L_1}, \dots, l_j \in \domain{L_j}$, is a specific instance of a \ac{prv} $A(L_1, \dots, L_j)$.
The idea behind the splitting procedure is that we would like to separate $gr(A)$ into two sets $gr(A) \setminus \{R'_i\}$ and $\{R'_i\}$, as $R'_i$ has to be treated differently than the remaining instances of $A$ due to the intervention.
Thus, every \ac{pf} $g$ for which there is an instance $\phi \in gr(g)$ such that $R'_i$ appears in the argument list of $\phi$ is split.
Formally, splitting a \ac{pf} $g$ replaces $g$ by two \acp{pf} $g'_{| C'}$ and $g''_{| C''}$ and adapts the constraints of $g'_{| C'}$ and $g''_{| C''}$ such that the inputs of $g'_{| C'}$ are restricted to all sequences that contain $R'_i$ and the inputs of $g''_{| C''}$ are restricted to the remaining input sequences. 
After the splitting procedure, the semantics of the model remains unchanged as the groundings of $M'$ are still the same as the groundings of the initial model $M$---they are just arranged differently across the sets of ground instances.

\Cref{alg:lci_ppcfg} then iterates over all possible parent sets (i.e., over all subsets of undirected neighbours) of $R'_1, \ldots, R'_k$.
When considering the subsets of undirected neighbours, it is necessary that all subsets are jointly valid, that is, they are not allowed to alter the conditional independence statements encoded by the model and they must not introduce any cycles when oriented towards $R'_1, \ldots, R'_k$.
To ensure the validity of these subsets, they are required to form a clique.
A clique $\boldsymbol C$ is a subset of nodes such that all pairs of nodes in $\boldsymbol C$ are directly connected via a factor\footnote{Formally, a clique $\boldsymbol C$ in a \ac{ppcfg} $M = (\boldsymbol A \cup \boldsymbol G, \boldsymbol E)$ is a subset of nodes such that for each pair of nodes $C_1 \in \boldsymbol C$, $C_2 \in \boldsymbol C$ with $C_1 \neq C_2$ it holds that there exists a factor $\phi$ such that there is an edge between $C_1$ and $\phi$ as well as an edge between $C_2$ and $\phi$ in $\boldsymbol E$ (either directed or undirected).}.
By ensuring that the subsets of undirected neighbours form cliques, the orientation of the undirected neighbours of $R'_1, \ldots, R'_k$ towards $R'_1, \ldots, R'_k$ does not introduce any pattern $R_1 - \phi_1 \to R_2 \gets \phi_2 - R_3$ such that $R_1$ and $R_3$ are not directly connected via a factor, as due to the clique property $R_1$ and $R_3$ are always guaranteed to be directly connected via a factor.
Thus, the conditional independence statements encoded by $M''$ are equivalent to those encoded by $M$.
Having obtained the parent sets of $R'_1, \ldots, R'_k$, \cref{alg:lci_ppcfg} next extends the modified model $M''$ to any fully directed \ac{ppcfg} represented by $M''$, if such a fully directed \ac{ppcfg} exists.
Afterwards, $M''$ is guaranteed to be a fully directed \ac{ppcfg} represented by $M$ and hence, the result of the provided query is given by \cref{eq:intervention_distribution}, which requires us to know the parents of all non-intervention \acp{rv} to compute a product over conditional probability distributions.
Since $M$ encodes the full joint probability distribution over $gr(\boldsymbol A)$, the conditional probability distributions are obtained by querying $M$, which allows us to compute the results for these queries using lifted probabilistic inference, thereby avoiding to ground the entire model if possible.
The resulting post-intervention distribution is then added to the result set $\boldsymbol P$ and the algorithm continues the above steps for the next possible parent set until all possible parent sets have been taken into account.
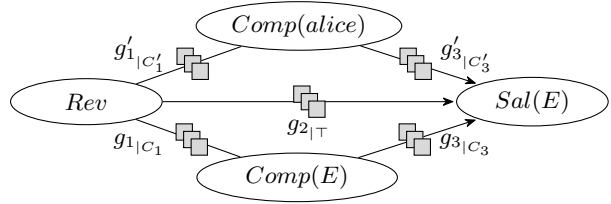
\begin{figure}[t]
	\centering
	\begin{tikzpicture}
	\node[rv, minimum width = 6.2em, draw] (rev) {$Rev$};
	\node[rv, minimum width = 8.2em, draw, below right = 1.7em and 3.5em of rev, inner sep = 2.0pt] (comp) {$Comp(E)$};
	\node[rv, minimum width = 6.2em, draw, right = 12.0em of rev, inner sep = 2.0pt] (sal) {$Sal(E)$};
	\node[rv, minimum width = 8.2em, draw, above right = 1.7em and 3.5em of rev, inner sep = 2.0pt] (compA) {$Comp(alice)$};

	\pfsatd{$(rev)!0.5!(comp)$}{180}{$g_{1_{| C_1}}$}{f1a}{f1}{f1b}{1mm,yshift=-0.2em}
	\pfsatd{$(rev)!0.5!(sal)$}{270}{$g_{2_{| \top}}$}{f2a}{f2}{f2b}{0.1mm}
	\pfsatd{$(comp)!0.5!(sal)$}{0}{$g_{3_{| C_3}}$}{f3a}{f3}{f3b}{1mm,yshift=-0.2em}

	\pfsatd{$(rev)!0.5!(compA)$}{180}{$g'_{1_{| C'_1}}$}{f1pa}{f1p}{f1pb}{1mm,yshift=0.4em}
	\pfsatd{$(compA)!0.5!(sal)$}{0}{$g'_{3_{| C'_3}}$}{f3pa}{f3p}{f3pb}{1mm,yshift=0.4em}

	\begin{pgfonlayer}{bg}
		\draw (comp) -- (f1);
		\draw (f1) -- (rev);
		\draw (rev) -- (f2);
		\draw[arc] (f2) -- (sal);
		\draw (comp) -- (f3);
		\draw[arc] (f3) -- (sal);
		\draw (compA) -- (f1p);
		\draw (f1p) -- (rev);
		\draw (compA) -- (f3p);
		\draw[arc] (f3p) -- (sal);
	\end{pgfonlayer}
\end{tikzpicture}
	\caption{The modified \ac{ppcfg} obtained after splitting the \acp{pf} in the \ac{ppcfg} shown in \cref{fig:ppcfg_example} to separate $Comp(alice)$ from $Comp(E)$.}
	\label{fig:ppcfg_split_example}
\end{figure}
\begin{example}
	Assume we want to compute $P(Rev \mid do(Comp(alice) = \mathrm{high}))$ in the \ac{ppcfg} $M$ depicted in \cref{fig:ppcfg_example}.
	As the intervention operates on $Comp(alice)$, first the \acp{pf} in $M$ are split on $Comp(alice)$ to obtain the modified \ac{ppcfg} $M'$ shown in \cref{fig:ppcfg_split_example}.
	In $M'$, $Comp(alice)$ is now a separate node in the graph, so its possible parents can be determined.
	Formally, $g_1$ has been replaced by two \acp{pf} $g_{1_{| C_1}}$ and $g'_{1_{| C'_1}}$ with constraints $C_1 = (E, \{bob,charlie\})$ and $C'_1 = (E, \{alice\})$.
	Analogously, $g_3$ has been replaced by $g_{3_{| C_3}}$ and $g'_{3_{| C'_3}}$.
	Next, \cref{alg:lci_ppcfg} considers all possible parent sets of $Comp(alice)$, which are given by $\Pa_1(Comp(alice), M'') = \emptyset$ (resulting from orienting $Comp(alice) - g'_{1_{| C'_1}} - Rev$ as $Comp(alice) - g'_{1_{| C'_1}} \to Rev$) and $\Pa_2(Comp(alice), M'') = \{Rev\}$ (resulting from orienting $Comp(alice) - g'_{1_{| C'_1}} - Rev$ as $Comp(alice) \gets g'_{1_{| C'_1}} - Rev$).
	Depending on whether $Comp(alice)$ actually has a parent or not, the post-intervention distribution is different and hence, \cref{alg:lci_ppcfg} returns a set containing two possible results for the query $P(Rev \mid do(Comp(alice) = \mathrm{high}))$.
\end{example}
Note that the model shown in \cref{fig:ppcfg_split_example} is more compact than the fully grounded model in \cref{fig:ppcfg_ground_example} as \cref{alg:lci_ppcfg} only grounds necessary parts of the model.
\Cref{alg:lci_ppcfg} can further benefit from interventions on multiple indistinguishable objects under the assumption that the graph structure is identical for all groundings.
For example, when considering the effect of a training course given to multiple employees on the revenue of the company, even if there are hundreds of employees, after intervening on their competence there are still only two nodes representing their competences in the graph, namely one node for all employees that have received the training course and another node for the remaining employees.
In this case, \cref{alg:lci_ppcfg} has to consider only two possible parent sets regardless of the number of employees while there are $2^{\abs{\domain{E}}}$ possible parent sets in an equivalent propositional model to consider (where $\abs{\domain{E}}$ is the number of employees). 
\begin{corollary}
	Let $M = (\boldsymbol A \cup \boldsymbol G, \boldsymbol E)$ be a \ac{ppcfg}.
	When intervening on a non-grounded \ac{prv} $A(L_1, \ldots, L_j) \in \boldsymbol A$, under the assumption that the graph structure is identical for all groundings, it holds that
	\begin{enumerate}
		\item \cref{alg:lci_ppcfg} considers $O(2^{\abs{\Ne(A, M)}})$ possible parent sets in the worst case, and
		\item in a propositional model, $O(2^{\prod_{R \in gr(A)} \abs{\Ne(R, gr(M))}})$ possible parent sets have to be considered in the worst case. 
	\end{enumerate}
\end{corollary}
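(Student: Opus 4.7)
The plan is to bound, in each of the two settings, the number of iterations of the outer loop on line~\ref{line:lci_ppcfg_loop} of \cref{alg:lci_ppcfg}, since each such iteration corresponds to exactly one candidate joint parent set for the intervention variables. The identical-graph-structure assumption is what drives the exponential savings of the lifted case: it guarantees that all ground instances of $A(L_1,\ldots,L_j)$ have symmetrically isomorphic neighborhoods in $gr(M)$.

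For Part~1 (the lifted case), I would argue that intervening on the non-grounded PRV $A(L_1,\ldots,L_j)$ means applying the same intervention value $r$ to every $R \in gr(A)$ simultaneously. Under the identical-graph-structure assumption, the splitting step on line~3 of \cref{alg:lci_ppcfg} is vacuous: there is no proper subset of $gr(A)$ to be separated from the remaining instances, so the single PRV node $A$ in $M$ continues to represent the entire set of intervention variables. The loop then ranges over clique-subsets $\boldsymbol C \subseteq \Ne(A, M)$. Because every subset of a clique is again a clique, the worst case is reached when $\Ne(A, M)$ is itself a clique, in which case every one of the $2^{\abs{\Ne(A, M)}}$ subsets is admissible. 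This yields the claimed $O(2^{\abs{\Ne(A, M)}})$ bound.

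For Part~2 (the propositional case), I would run the same algorithm on the fully grounded model $gr(M)$, where the intervention now consists of $k = \abs{gr(A)}$ separate variables $R'_1,\ldots,R'_k$. The loop now ranges over the Cartesian product $\boldsymbol C_1 \times \cdots \times \boldsymbol C_k$ of clique-subsets, one for each $R'_i$. In the worst case each $\Ne(R'_i, gr(M))$ is itself a clique, so each factor contributes $2^{\abs{\Ne(R'_i, gr(M))}}$ options, giving $\prod_{R \in gr(A)} 2^{\abs{\Ne(R, gr(M))}} = 2^{\sum_{R \in gr(A)} \abs{\Ne(R, gr(M))}}$ joint configurations. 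Writing the common neighborhood size as $m = \abs{\Ne(R, gr(M))}$ and setting $n = \abs{gr(A)}$, this count is $2^{mn}$, and since $mn \leq m^n$ whenever $m, n \geq 2$, we obtain the stated upper bound $O(2^{\prod_{R \in gr(A)} \abs{\Ne(R, gr(M))}})$.

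The main obstacle is really a modelling subtlety rather than a computational one: one has to justify that the PRV-level intervention truly collapses into a single node in $M$ (so that the splitting step does nothing), and this rests entirely on the identical-graph-structure hypothesis in the corollary. Once that is pinned down, the remainder of the argument is routine clique-counting combined with the elementary Cartesian-product/exponent comparison above.
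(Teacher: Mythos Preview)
The paper states this corollary without a formal proof; it is offered as an immediate consequence of the informal discussion preceding it (intervening on an entire \ac{prv} leaves the lifted node intact so that only one parent-set choice is made at the \ac{prv} level, whereas the ground model enumerates choices independently for every instance). Your proposal correctly formalises precisely this reasoning and is in line with the paper's argument.

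One small caveat on Part~2: your inequality $mn \le m^n$ needs $m \ge 2$, but the paper's own running example has $m = \abs{\Ne(Comp(e), gr(M))} = 1$ (each ground instance has only $Rev$ as undirected neighbour), and there the product in the exponent collapses to $1$ while the paper explicitly counts $2^{\abs{\domain{E}}}$ parent sets. Your intermediate bound $2^{\sum_{R \in gr(A)} \abs{\Ne(R,gr(M))}}$ is actually the tight count and matches the paper's worked example; the discrepancy lies in the corollary's stated exponent (product versus sum), not in your argument.
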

Finally, note that \cref{alg:lci_ppcfg} can handle multiple query variables at once instead of a single query variable $Q$ (the query variables are then not summed out in \cref{line:lci_ppcfg_formula}).

\section{Conclusion}
We introduce \acp{ppcfg} as probabilistic relational models that allow to incorporate partial causal knowledge, thereby enabling lifted causal inference without the need for a fully specified causal model.
A lifted representation such as a \ac{ppcfg} is more expressive than a propositional model and allows for tractable inference with respect to domain sizes of \acp{lv}.
We further present an algorithm to efficiently compute the effect of joint interventions in \acp{ppcfg} (i.e., on a lifted level).
Our proposed algorithm is also able to efficiently deal with interventions on \acp{prv} representing sets of indistinguishable \acp{rv}.

In future work, we aim to investigate the effect of interventions in a relational model with mutual interdependencies in form of bidirectional edges.
We conjecture that \acp{rv} with mutual interdependencies can be collapsed into a single node in the graph such that our proposed algorithm can still be applied.
Another interesting direction for future work is to allow for hidden confounders.

\section*{Acknowledgements}
This work is funded by the BMBF project AnoMed 16KISA057.
This preprint has not undergone peer review or any post-submission improvements or corrections.
The Version of Record of this contribution is published in \emph{Lecture Notes in Computer Science, Volume 15350}, and is available online at \url{https://link.springer.com/chapter/10.1007/978-3-031-76235-2_20}.

\bibliographystyle{splncs04}
\bibliography{references}



\end{document}